\newtheorem{theorem}{Theorem}  
\DeclareMathOperator*{\argmin}{arg\,min}
\title{Beyond First-Order: Training LLMs with Stochastic Conjugate Subgradients and AdamW}
\author{%
  Di Zhang \\
  Department of Industrial and System Engineering\\
  University of Southern California\\
  Los Angeles, CA 90089 \\
  \texttt{dzhang22@usc.edu} \\
  \And
  Yihang Zhang \\
  Department of Industrial and System Engineering\\
  University of Southern California\\
  Los Angeles, CA 90089 \\
  \texttt{zhangyih@usc.edu} \\
}
\begin{document}

\maketitle

\begin{abstract}
   Stochastic gradient-based descent (SGD), have long been central to training large language models (LLMs). However, their effectiveness is increasingly being questioned, particularly in large-scale applications where empirical evidence suggests potential performance limitations. In response, this paper proposes a stochastic conjugate subgradient method together with adaptive sampling tailored specifically for training LLMs. The method not only achieves faster convergence per iteration but also demonstrates improved scalability compared to traditional SGD techniques. It leverages sample complexity analysis to adaptively choose the sample size, employs a stochastic conjugate subgradient approach to determine search directions and utilizing an AdamW-like algorithm to adaptively adjust step sizes. This approach preserves the key advantages of first-order methods while effectively addressing the nonconvexity and non-smoothness inherent in LLMs training. Additionally, we provide a detailed analysis of the advantage of the algorithm. Experimental results show that the proposed method not only maintains, but in many cases surpasses, the scalability of traditional SGD techniques, significantly enhancing both the speed and accuracy of the optimization process.
\end{abstract}

\section{Introduction}

\subsection{Problem Setup}

Large Language Models (LLMs) are trained to predict the next token in a sequence, given the preceding tokens. The training objective is defined as minimizing the Negative Log-Likelihood (NLL) of the training data. Given a dataset $\mathcal{D} = \{(x^{(i)}, y^{(i)})\}_{i=1}^{N}$, where $x^{(i)}$ is the input text sequence and $y^{(i)}$ is the target token sequence, the objective function can be written as:

\begin{equation}
L(\theta) = - \sum_{i=1}^{N} \sum_{t=1}^{T_i} \log P_\theta(y_t^{(i)} \mid y_{1:t-1}^{(i)})
\end{equation}

where:
\begin{itemize}
    \item $T_i$ is the length of the $i$-th sequence.
    \item $y_{1:t-1}^{(i)}$ represents the tokens preceding token $y_t^{(i)}$.
    \item $P_\theta$ is the model's predicted probability distribution over the vocabulary, parameterized by $\theta$.
\end{itemize}

To train the LLMs, many stochastic gradient-based descent (SGD) algorithms have been proposed such as Adam and AdamW. SGD methods dominate the field for convincing reasons: low computational cost, simplicity of implementation, and strong empirical results. However, despite the advantages, there are also many limitations: a) SGD methods are not known for their numerical accuracy especially when the functions are non-smooth and b) establishing rigorous convergence guarantees is highly challenging because of the non-smooth and non-convex property of the objective function.

\subsection{Contributions}

Instead of relying solely on the SGD approaches, a stochastic conjugate subgradient (SCS)~\cite{ZS2024, zhang2024sampling, zhang2024stochastic, zhang2025stochastic} method together with adaptive sampling strategy is considered in this paper. Originated from~\cite{W1975,sen2022stochastic}, the method accommodates the curvature of a objective functions, inherits the spirit of the momentum methods and apply the sample complexity analysis to reduce the computational burden. This combination enhances the power of SGD approaches without the additional burden of second-order approximations. In other words, our method shares the spirit of Hessian-Free (HF) optimization, which has gained attention in the machine learning community~\cite{MJ2010, CScheinberg2017}. As our theoretical analysis and computational results reveal, the new method provides a ``sweet spot" at the intersection of speed, accuracy, and scalability of algorithms. Our main contributions are:

\begin{itemize}
    \item To address the challenges posed by the non-smoothness of LLMs, we extend the non-smooth SCS method~\cite{W1975,zhang2024stochastic} to the training of LLMs. The SCS technique exhibits behavior similar to that of higher-order methods but avoids the significant computational overhead typically associated with maintaining a Hessian matrix at each iteration. To the best of our knowledge, this is the first attempt to apply such a beyond-first-order optimization technique to the training of LLMs.
    
    \item While a straightforward application of~\cite{W1975} to LLMs using sample average approximation (SAA)~\cite{Shapiro2003} might be considered, our focus is on solving LLMs with extremely large datasets. Consequently, we employ an adaptive sampling strategy over a deterministic finite sum approximation. By leveraging the sample complexity analysis, we can take advantage of an adaptive sampling strategy which dramatically reduce the computational burden of training LLMs.   

    \item The computational results demonstrate that, from an optimization standpoint, the solutions produced by SCS yield lower objective function values compared with SGD methods. More crucially, the efficiency and accuracy of our algorithm surpasses those of algorithms such as Adam and AdamW, which are specialized SGD algorithms for training LLMs. 

    \item The proposed algorithm integrates several disparate notions such as adaptive sampling, decomposition, conjugate subgradients, and dynamic learning rate in Adam and AdamW. We demonstrate that this amalgamation effectively approximates solutions for LLMs with extremely large datasets.
\end{itemize}

The structure of this paper is as follows. In \textsection\ref{related work}, we will discuss some related work in the literature. In \textsection\ref{method}, we propose a SCS + AdamW algorithm which integrated the AdamW algorithm with SCS first moment direction. In \textsection\ref{discussion}, some good properties of the SCS direction are discussed. Then in \textsection\ref{experiments}, we present preliminary computational results that demonstrate the advantages of the proposed method over baseline approaches. Next, in \textsection\ref{limitations}, we discuss the limitations of this paper and some work directions in the future. Finally, conclusions are summarized in  \textsection\ref{conclusion}.

\section{Related Work} \label{related work}

\subsection{SGD, Adam and AdamW}
Gradient-based optimization methods have evolved to handle complex, high-dimensional problems in deep learning and machine learning. Kingma and Ba~\cite{kingma2014adam} introduced Adam, an optimization algorithm that combines the advantages of Momentum and RMSProp. It maintains two moving averages for each parameter: (a) First moment estimate (mean of past gradients) (b) Second moment estimate (uncentered variance of past gradients). However, the major limitation of it is that the L2 regularization interacts with adaptive learning rates, making weight decay less effective. Loshchilov and Hutter~\cite{loshchilov2017decoupled} proposed AdamW, addressing the key limitation of Adam. Empirical results show that AdamW improves performance on NLP and vision tasks compared to Adam.

\subsection{Stochastic Conjugate Subgradient Method}

In contrast to first-order methods, algorithms based on conjugate gradients (CG) provide finite termination when the objective function is a linear-quadratic deterministic function~\cite{N2006}. Furthermore, CG algorithms are also used for large-scale nonsmooth convex optimization problems~\cite{W1975,Y2016}. However, despite its many strengths, (e.g., fast per-iteration
convergence, frequent explicit regularization on step-size, and better parallelization than first-order
methods), the use of conjugate gradient methods for Stochastic Optimization is uncommon. Although~\cite{J2018,Y2022} proposed some new stochastic conjugate gradient (SCG) algorithms, their assumptions do not support non-smooth objective functions, and hence not directly applicable to LLMs.

\subsection{Adaptive sampling}
The LLMs algorithm typically employs sample average approximation (SAA)~\cite{S1998} which has a fixed sample size to estimate the uncertainty. However, it is difficult to determine the sample size~\cite{S2016,DS2024} when applying such methods. While  adaptive sampling~\cite{M1999} has emerged as a powerful approach for instances with a large number of outcomes (especially for multi-dimensional uncertainty, see~\cite{HS1991, HS1994,lin2024economic}), such adaptive sampling has remained elusive in the context of LLMs. Our algorithm will strategically sample a subset of scenario subproblems in each iteration, facilitating dynamic scenario updates to enhance computational efficiency. We will also leverage sample complexity analysis~\cite{V1998} to support the choice of the ultimate sample size used at termination. This approach ensures a theoretically grounded, and computationally realistic approach.

This paper propose an integrated framework that combines the SCS with AdamW. Our goal is to create an efficient and rubust algorithm that converges faster than SGD-based algorithms when training LLMs. Figure~\ref{fig: flowchart} provides an overview of the proposed framework.

\begin{figure}[ht]
    \caption{SCSAdamW Flowchart}
    \centering
    \includegraphics[width=0.8\linewidth]{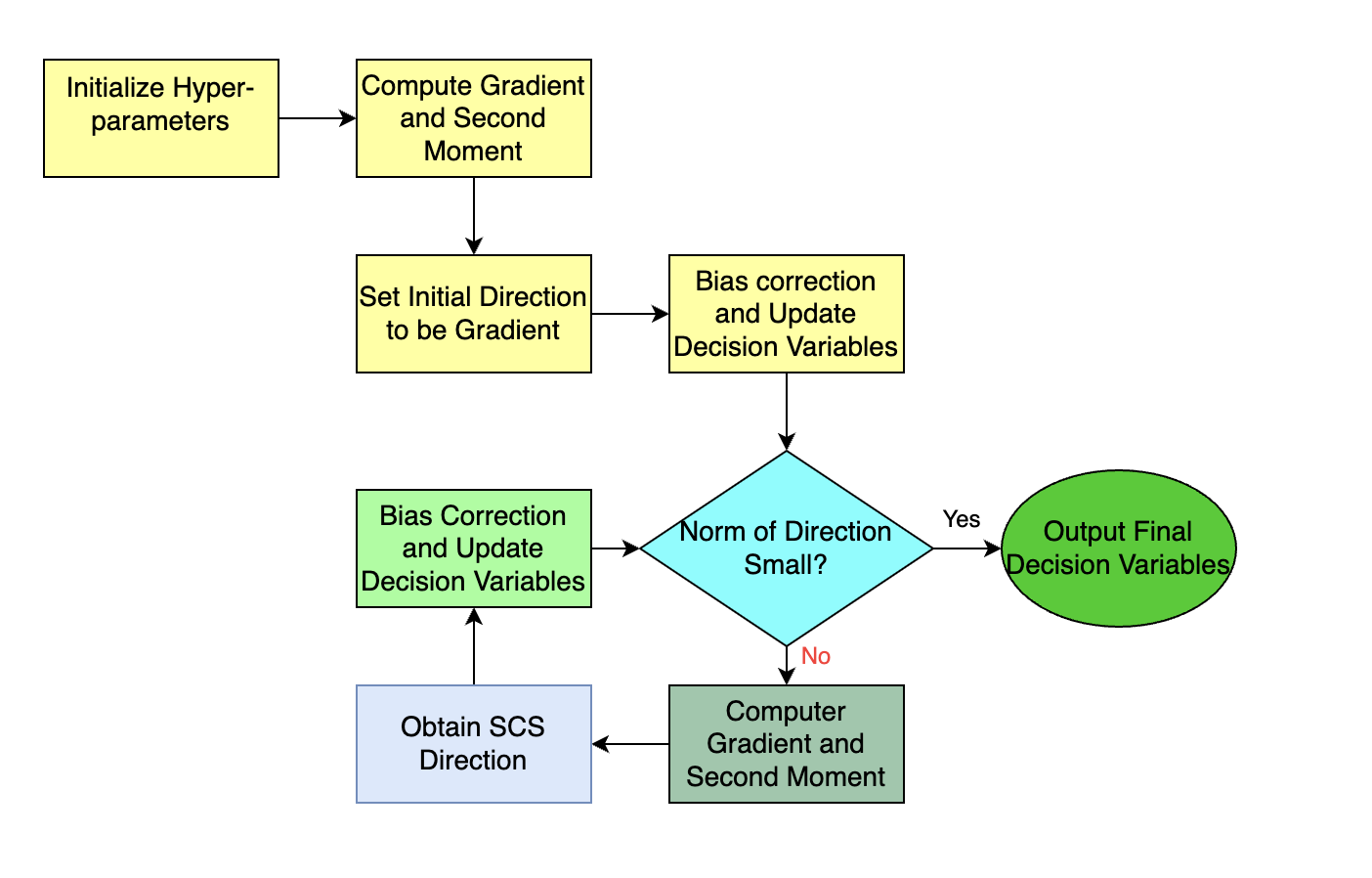}
    \label{fig: flowchart}
\end{figure}

\section{Method} \label{method}

The stochastic conjugate subgradient + AdamW (SCSAdamW) algorithm is a hybrid optimization method that integrates the AdamW optimizer with a conjugate subgradient approach to improve convergence. The method combines adaptive learning rates (AdamW) with conjugate subgradient updates, allowing for better handling of curvature information while maintaining stability and convergence properties. Here’s a summary of its major steps:

\begin{itemize}

    \item \textbf{Sequential adaptive sampling} At iteration $t$, different from classic LLMs algorithms which solves an ''all-in-one`` optimization problem, we will randomly sample $|N_t|$ subproblems. This is similar to using sample average approximation (SAA) to approximate function $L$ using $L_t$,  
    \begin{equation} \label{OSAA}
        L_t(\theta)= \sum_{n=1}^{N_t} \sum_{t=1}^{T_i} \log P_\theta(y_t^{(i)} \mid y_{1:t-1}^{(i)}),
    \end{equation}
    \noindent where $|N_t|$ will be determined based on concentration inequalities in Theorem \ref{sample complexity}.

    \item \textbf{Direction finding} The idea here is inspired by non-smooth conjugate subgradient method \cite{W1974} which uses the smallest norm of the convex combination of the previous search direction ($d_{t-1}$) and the current subgradient ($g_t$). More specifically, we first solve the following one-dimensional QP
    \begin{equation} \label{p^k}
        \lambda_t^* = \argmin_{\lambda_t \in [0,1]}\frac{1}{2}||(1-\lambda_t) \cdot d_{t-1} + \lambda_t \cdot g_t||^2.
    \end{equation}

    \noindent The solution of \eqref{p^k} is,

    \begin{equation*}
    \label{ssn}
        \lambda_t^*=\Pi_{[0,1]}\big(\frac{- \ \langle d_{t-1}, g_t \rangle+||g_t||^2}{||d_{t-1}||^2-2\langle d_{t-1}, g_t \rangle +||g_t||^2}\big),
    \end{equation*}

    \noindent where $\Pi_{[0,1]}$ denotes the projection operator on $[0,1]$. Then we can set the new search direction as
    
    \begin{equation*}
        d_t=(1-\lambda_t^*) \cdot d_{t-1}+\lambda_t^* \cdot g_t,
    \end{equation*}

    \noindent where $G_k=\{d_{k-1},g_k\}$, and $\lambda_k^*$ denotes the optimal weight for the convex combination.  Clearly if one fixes $\lambda_k = 0$, then the search direction reduces to that of the subgradient method.

    \item \textbf{Dynamic learning rate} SCS-AdamW follows an adaptive learning rate scheme, meaning each parameter has its own learning rate. This is achieved by normalizing the conjugate subgradient updates using second momentum estimates. The term $\frac{\eta}{\sqrt{\hat{v}_t} + \varepsilon}$ ensures that the update magnitude adapts to the scale of gradients, preventing large updates in high-gradient regions and avoids vanishing updates in low-gradient regions.

    \item \textbf{Decoupled weight decay} In standard Adam (without decoupled weight decay), L2 regularization is applied inside the subgradient update, meaning the weight decay term is included in the computation of $g_t$. However, this can interfere with the adaptive learning rate mechanism of Adam, making weight decay behave inconsistently across different parameters.AdamW enforces this by subtracting $\eta \lambda \theta_{t-1}$ after the Adam update step, ensuring that:
    (a) Weight decay acts as a separate force reducing parameter values without altering gradient estimates. (b) Improved generalization, since weight decay is applied consistently across parameters, avoiding interference with Adam’s adaptive updates.

    \item \textbf{Termination criteria} The algorithm concludes its process when $||d_t|| \leq \varepsilon$. As we will show in Theorem \ref{d_k and g_k}, a diminutive value of $||d_t||$ indicates a small norm of the subgradient $g_t$, fulfilling the stationary point condition for an unconstrained non-convex optimization problem.

\end{itemize}

\begin{algorithm}
\caption{Stochastic Conjugate Subgradient + AdamW (SCSAdamW) Algorithm}
\begin{algorithmic}[1]
\Require Parameters $\theta$, learning rate $\eta$, decay rates $\beta_2$, weight decay $\lambda$, small constant $\zeta$
\State Initialize $m = 0$, $v = 0$, $d = 0$, $t = 0$

\While{$||d_k|| > \varepsilon$}
    \State $t \leftarrow t + 1$
    \State Compute gradient $g_t = \nabla L_t(\theta_t)$

    \If{$t > 1$}
        \State Compute conjugate direction:
        \[
        \lambda_t^*=\Pi_{[0,1]}\big(\frac{- \ \langle d_{t-1}, g_t \rangle+||g_t||^2}{||d_{t-1}||^2-2\langle d_{t-1}, g_t \rangle +||g_t||^2}\big)
        \]
        \[
        d_t = (1-\lambda_t^*) \cdot d_{t-1}+\lambda_t^* \cdot g_t
        \]
    \Else
         \[d_t = g_t\]
    \EndIf
    \State Update second moment (variance):
    \[
    v_t = \beta_2 v_{t-1} + (1 - \beta_2) g_t^2
    \]

    \State Bias correction:
    \[
    \hat{d}_t = \frac{d_t}{1 - (\lambda_t^*)^t}, \quad \hat{v}_t = \frac{v_t}{1 - \beta_2^t}
    \]

    \State Update parameters:
    \[
    \theta_t = \theta_{t-1} - \frac{\eta}{\sqrt{\hat{v}_t} + \zeta} \hat{d}_t
    \]

    \State Apply decoupled weight decay (AdamW):
    \[
    \theta_t \leftarrow \theta_t - \eta \lambda \theta_{t-1}
    \]
\EndWhile

\end{algorithmic}
\end{algorithm} 

\section{Discussion} \label{discussion}

\subsection{Comparison with Momentum Method}

In this section, we compare our direction-finding method, expressed as
$\alpha_{t+1} = \alpha_{t} - t \cdot d_t$ with several momentum methods and elucidate their relationships. The primary insight is that the direction $-d_t$ represents a specific convex combination of all prior subgradients. Among all possible directions within the convex hull formed by these subgradients, $-d_t$ possesses the smallest norm. While other momentum methods also utilize a specialized convex combination of past subgradients, they do not achieve the minimal norm. Given that the optimality condition for the convex optimization problem is $0 \in \partial f(\alpha_t)$, our algorithm is likely to outperform other momentum methods. 

Polyak’s momentum~\cite{polyak1964some}, also known as the “heavy ball method”, introduces an additional “momentum” term $ \theta \cdot (\alpha_t - \alpha_{t-1})$ The full momentum update is:
\begin{equation*}
    \begin{aligned}
        \alpha_{t+1} & = \alpha_{t} - t \cdot g_t + \theta \cdot (\alpha_t - \alpha_{t-1}) \\
        & = \alpha_{t} - t \cdot g_t + \theta \cdot (- t g_{t-1} + \theta (\alpha_{t-1} - \alpha_{t-2}))\\
        & = \alpha_{t} - t (g_t + \theta g_{t-1} + ... + \theta^{t-1} g_1).
    \end{aligned}
\end{equation*}

\noindent Let $\Theta_t = 1 + \theta + ... \theta^{t-1} = \frac{1 - \theta^t}{1-\theta}$ and $t_t = \frac{t}{\Theta_t}$, we have

\begin{equation*}
    \alpha_{t+1} = \alpha_{t} - t_t \cdot (\frac{1}{\Theta_t} g_t + \frac{\theta}{\Theta_t} g_{t-1} + ... + \frac{\theta^{t-1}}{\Theta_t} g_1).    
\end{equation*}

 \noindent Note that the direction $(\frac{1}{\Theta_t} g_t + \frac{\theta}{\Theta_t} g_{t-1} + ... + \frac{\theta^{t-1}}{\Theta_t} g_1)$ is a convex combination of all previous subgradients and according to Cauchy–Schwarz inequality, Polyak’s momentum will be equivalent to our direction-finding approach if and only if $||g_t||$ decreases exponentially with rate $\theta$, i.e.,

 \begin{equation*}
     ||g_t|| = \theta \cdot ||g_{t-1}|| = ... = \theta^{t-1} \cdot ||g_{1}||.
 \end{equation*}

\subsection{Comparison of Adam, AdamW and SCSAdamW}

Adam is a first-order optimizer that smooths gradient noise using momentum and variance, but can suffer from over-regularization issues due to coupled weight decay. AdamW improves Adam by decoupling weight decay, which significantly improves generalization and training stability.
SCSAdamW further upgrades AdamW by:
Replacing plain gradient direction with a stochastic conjugate subgradient direction. This mimics second-order behavior without explicitly computing or storing a Hessian. It retains the low memory and computation cost of AdamW, but gives stronger search directions, which can speed up convergence or stabilize non-smooth training of LLMs especially with non-smooth loss. Moreover, it also leverages the sample complexity analysis and concentration inequality to determine the training sample size, which the algorithm more robust.  Table \ref{tab:adam-comparison} gives a detailed comparison:

\begin{table}[h]
\centering
\caption{Comparison between Adam, AdamW, and SCSAdamW}
\renewcommand{\arraystretch}{1.2} 
\begin{tabular}{|p{3cm}|p{3cm}|p{3cm}|p{3.5cm}|}
\hline
\textbf{Aspect} & \textbf{Adam} & \textbf{AdamW} & \textbf{SCSAdamW} \\
\hline
Search Direction & Gradient $g_t$ & Gradient $g_t$ & Conjugate subgradient $d_t$ (combining $d_{t-1}$ and $g_t$) \\
\hline
First Moment (Momentum) & Yes ($m_t$) & Yes ($m_t$) & (uses conjugate direction $d_t$) \\
\hline
Second Moment (Variance) & Yes ($v_t$) & Yes ($v_t$) & Yes ($v_t$) \\
\hline
Weight Decay Handling & Coupled (inside gradient update) & Decoupled (applied separately) & Decoupled (applied separately) \\
\hline
Bias Correction & On $m_t$, $v_t$ & On $m_t$, $v_t$ & On $d_t$, $v_t$ \\
\hline
Update Rule & $\theta_t \leftarrow \theta_{t-1} - \eta \frac{\hat{m}_t}{\sqrt{\hat{v}_t} + \zeta}$ & Same as Adam + decoupled weight decay & $\theta_t \leftarrow \theta_{t-1} - \eta \frac{\hat{d}_t}{\sqrt{\hat{v}_t} + \zeta}$ plus weight decay \\
\hline
Higher-Order Information & No & No & Yes (via conjugate subgradient) \\
\hline
Computational Overhead & Low & Low &  slightly more than AdamW \\

\hline
Sample complexity analysis to determine the sample size & No & No &  Yes \\

\hline
Expected Behavior & Smooths noise, slow near saddles & Improves generalization and stability & Faster convergence in non-smooth problems \\
\hline
\end{tabular}
\label{tab:adam-comparison}
\end{table}

\subsection{Stationary Point}
The primary goal of our optimization algorithm is to find a point where the true gradient (or subgradient, for non-smooth functions) of the loss function is close to zero, indicating a stationary point. Our algorithm, SCSAdamW, terminates when the norm of its computed search direction $||d_t||$, falls below a predefined small threshold $\varepsilon$, which is guaranteed by Theorem \ref{smallest norm}. A natural question is whether this internal termination criterion reliably corresponds to the actual subgradient $||g_t||$ also being small. Theorem \ref{d_k and g_k} below addresses this by establishing a formal connection. It demonstrates that if the search direction norm $||d_t||$ newly drops below $\varepsilon$ (having been larger in the previous step), and if the current subgradient $g_t$ maintains some general agreement with (i.e., is not strongly opposing) the previous search direction $d_{t-1}$, then the norm of the true subgradient $||g_t||$ is indeed bounded by a value proportional to $\varepsilon$.

\begin{theorem}
\label{smallest norm}
Let the sequence of search directions $\{d_t\}$ be generated starting with $d_1 = g_1$, where $g_1$ is a subgradient of the objective function at the initial point. For $t > 1$, let $d_t = \arg\min_{x \in \text{conv}\{d_{t-1}, g_t\}} ||x||$.

The following properties hold:
\begin{enumerate}
    \item $d_t \in \text{conv}\{g_1, \dots, g_t\}$ for all $t \geq 1$.
    \item Furthermore, if the subgradients are uniformly bounded such that $||g_t|| \leq C$ for some constant $C > 0$, and there exists a constant $m \in (0, 1)$ such that the condition $\langle d_t, g_t \rangle \geq m ||d_t||^2$ holds for all $t \geq 1$ where $d_t \neq 0$, then the norm of the search directions converges to zero:
    \begin{equation*}
        ||d_t|| \leq \frac{C}{(1-m)\sqrt{t+1}} \to 0,
    \end{equation*}
    which implies $||d_t|| \to 0$ as $t \to \infty$.
\end{enumerate}
\end{theorem}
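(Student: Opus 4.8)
The proof naturally splits along the two claims, with the first supplying the structural scaffolding for the second.

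\textbf{Part 1.} I would establish $d_t\in\text{conv}\{g_1,\dots,g_t\}$ by induction on $t$. The case $t=1$ is immediate since $d_1=g_1$. For the inductive step, the minimiser $d_t$ of $\|\cdot\|$ over the segment $\text{conv}\{d_{t-1},g_t\}$ can be written as $d_t=(1-\lambda_t^*)d_{t-1}+\lambda_t^*g_t$ with $\lambda_t^*\in[0,1]$ (this is exactly the closed form for $\lambda_t^*$ recalled in the algorithm); substituting the inductive representation $d_{t-1}=\sum_{i=1}^{t-1}w_i g_i$ with $w_i\ge 0$ and $\sum_i w_i=1$ then exhibits $d_t$ as a convex combination of $g_1,\dots,g_t$. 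It is worth recording the induced weight recursion $w_i^{(t)}=(1-\lambda_t^*)w_i^{(t-1)}$ for $i<t$ and $w_t^{(t)}=\lambda_t^*$, which makes the convex-hull membership explicit and is convenient for the bookkeeping in Part 2.

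\textbf{Part 2.} Here I would proceed in three steps. (i) \emph{Monotonicity and projection inequalities.} Since $d_{t-1}$ and $g_t$ both lie in $\text{conv}\{d_{t-1},g_t\}$, minimality of $d_t$ gives $\|d_t\|\le\min\{\|d_{t-1}\|,\|g_t\|\}$; moreover the first-order optimality condition for the projection of the origin onto the segment yields $\langle d_t,g_t-d_t\rangle\ge 0$ and $\langle d_t,d_{t-1}-d_t\rangle\ge 0$, i.e.\ $\langle d_t,g_t\rangle\ge\|d_t\|^2$ and $\langle d_t,d_{t-1}\rangle\ge\|d_t\|^2$. (ii) \emph{A one-step contraction.} Testing the defining minimisation of $d_t$ at the feasible point $\bigl(1-\tfrac1t\bigr)d_{t-1}+\tfrac1t g_t$ and expanding the square gives
\begin{equation*}
\|d_t\|^2 \le \Bigl(1-\tfrac1t\Bigr)^2\|d_{t-1}\|^2 + \tfrac{2}{t}\Bigl(1-\tfrac1t\Bigr)\langle d_{t-1},g_t\rangle + \tfrac1{t^2}\|g_t\|^2 .
\end{equation*}
The role of the hypothesis $\langle d_t,g_t\rangle\ge m\|d_t\|^2$, combined with the projection inequalities of step (i) and the representation $d_t=(1-\lambda_t^*)d_{t-1}+\lambda_t^* g_t$, is to bound the cross term $\langle d_{t-1},g_t\rangle$ from above: quantitatively it prevents $g_t$ from pointing back along $d_{t-1}$ strongly enough to destroy the contraction, and this is precisely where the factor $1-m$ appears. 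Using $\|g_t\|\le C$, the display reduces to a recursion of the form $\|d_t\|^2\le \rho_t\|d_{t-1}\|^2 + C^2/t^2$ with $\rho_t$ of order $(t-1)/t$ after the $(1-m)^{-2}$ scaling. (iii) \emph{Induction on the rate.} Feeding the claimed bound at step $t-1$ into this recursion and simplifying the resulting algebraic inequality in $t$ yields $\|d_t\|\le C/\bigl((1-m)\sqrt{t+1}\bigr)$ at step $t$, with the base case supplied by $\|d_1\|=\|g_1\|\le C$; letting $t\to\infty$ gives $\|d_t\|\to 0$. (If $d_t=0$ at some step, the bound is trivial and is absorbing, consistent with the exclusion in the hypothesis.)

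The steps I expect to be routine are the monotonicity and projection inequalities, the test-point expansion, and the closing induction. The genuine obstacle is the cross-term estimate in step (ii): one must convert the hypothesis $\langle d_t,g_t\rangle\ge m\|d_t\|^2$ — together with $\langle d_t,d_{t-1}\rangle\ge\|d_t\|^2$ and the convex representation $d_t=(1-\lambda_t^*)d_{t-1}+\lambda_t^* g_t$ — into the required upper bound on $\langle d_{t-1},g_t\rangle$ carrying the correct $(1-m)$ dependence, and then verify the boundary cases $\lambda_t^*=1$ (where $d_t=g_t$ so $\|d_t\|\le C$ settles the step directly) and $\lambda_t^*=0$ (where $d_t=d_{t-1}$ and the bound must be argued to be inherited from the previous iterate), so that the estimate holds uniformly in $t$.
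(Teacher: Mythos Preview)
The paper does not actually supply a proof of this theorem; it simply writes ``The proof can be seen in~\cite{K1998}.'' So there is no in-paper argument to compare your proposal against, and what follows is an assessment of your sketch on its own terms.

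Your Part~1 is correct and is the standard one-line induction.

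For Part~2 there is a structural oddity you should notice: the projection inequality you derive in step~(i), namely $\langle d_t,g_t\rangle\ge\|d_t\|^2$, already implies the theorem's hypothesis $\langle d_t,g_t\rangle\ge m\|d_t\|^2$ for every $m\in(0,1)$. So as written the hypothesis is vacuous, which makes the appearance of the factor $(1-m)$ in the conclusion puzzling. This strongly suggests the intended hypothesis in the cited source involves different quantities --- most likely a line-search/curvature condition of the form $\langle g_t, d_{t-1}\rangle \le m\|d_{t-1}\|^2$ (the \emph{new} subgradient makes a controlled angle with the \emph{old} direction), which is the standard assumption in Wolfe's conjugate-subgradient convergence analysis and is exactly the upper bound on the cross term you say you need in step~(ii). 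Under that reading the obstacle you flag dissolves: the cross term $\langle d_{t-1},g_t\rangle$ is bounded directly by hypothesis, and your test-point recursion closes with the advertised rate after routine algebra.

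Under the theorem as literally stated, however, your plan cannot be completed at step~(ii): you need an \emph{upper} bound on $\langle d_{t-1},g_t\rangle$, but the hypothesis gives only a \emph{lower} bound on $\langle d_t,g_t\rangle$, and substituting $d_t=(1-\lambda_t^*)d_{t-1}+\lambda_t^*g_t$ converts this into a lower bound on $(1-\lambda_t^*)\langle d_{t-1},g_t\rangle+\lambda_t^*\|g_t\|^2$, which points the wrong way. Your identification of this as the ``genuine obstacle'' is accurate; the obstacle is not a missing trick on your part but almost certainly a misstatement of the hypothesis in the paper, and you should consult the cited reference for the precise condition before trying to close the argument.
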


The proof can be seen in~\cite{K1998}.

\begin{theorem} \label{d_k and g_k}
     \label{d_k^*}
    Let $t$ be the smallest index for which $||d_t|| \leq \varepsilon$ and $||d_{t-1}|| > \sqrt{1 + \eta} \cdot \varepsilon $, with the additional assumption that $\langle g_t,d_{t-1} \rangle \geq 0$, then we have $||g_t|| \leq \sqrt{1+\frac{1}{\eta}} \cdot \varepsilon$.
\end{theorem}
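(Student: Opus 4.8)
The plan is to exploit the variational characterization of $d_t$ as the minimum-norm element of $\mathrm{conv}\{d_{t-1}, g_t\}$. Since $d_t = (1-\lambda_t^*) d_{t-1} + \lambda_t^* g_t$ with $\lambda_t^* \in [0,1]$ minimizing $\phi(\lambda) = \tfrac{1}{2}\|(1-\lambda) d_{t-1} + \lambda g_t\|^2$, I would first dispose of the boundary cases. If $\lambda_t^* = 0$, then $d_t = d_{t-1}$, which contradicts $\|d_{t-1}\| > \sqrt{1+\eta}\,\varepsilon > \varepsilon \geq \|d_t\|$; hence $\lambda_t^* \in (0,1]$. The interesting structural fact is the first-order optimality condition at an interior or right-endpoint minimizer: $\phi'(\lambda_t^*) \leq 0$ is impossible unless $\lambda_t^* = 1$, and in general the optimality of $\lambda_t^*$ forces $\langle d_t, g_t - d_{t-1}\rangle \geq 0$ when $\lambda_t^* = 1$ and $\langle d_t, g_t - d_{t-1}\rangle = 0$ when $\lambda_t^* \in (0,1)$. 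Either way, since $d_t = d_{t-1} + \lambda_t^*(g_t - d_{t-1})$, I can write $\langle d_t, d_t\rangle = \langle d_t, d_{t-1}\rangle + \lambda_t^* \langle d_t, g_t - d_{t-1}\rangle$, and the sign information gives $\langle d_t, d_{t-1}\rangle \leq \|d_t\|^2$.

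Next I would extract the key inequality relating $g_t$ to $d_t$ and $d_{t-1}$. Because $d_t$ is the projection of the origin onto the segment $[d_{t-1}, g_t]$, the standard projection inequality gives $\langle d_t, x - d_t \rangle \geq 0$ for every $x$ in the segment — in particular for $x = g_t$, so $\langle d_t, g_t \rangle \geq \|d_t\|^2$. Combined with the assumption $\langle g_t, d_{t-1}\rangle \geq 0$, the plan is to decompose $g_t$ along the segment direction. Write $g_t = \frac{1}{\lambda_t^*} d_t - \frac{1-\lambda_t^*}{\lambda_t^*} d_{t-1}$ (valid since $\lambda_t^* > 0$). Then
\[
\|g_t\|^2 = \frac{1}{(\lambda_t^*)^2}\|d_t\|^2 - \frac{2(1-\lambda_t^*)}{(\lambda_t^*)^2}\langle d_t, d_{t-1}\rangle + \frac{(1-\lambda_t^*)^2}{(\lambda_t^*)^2}\|d_{t-1}\|^2.
\]
The middle term is the one to control: using $\langle d_t, d_{t-1}\rangle \leq \|d_t\|^2$ does not immediately help because of its sign, so instead I would use $\langle d_t, d_{t-1}\rangle \geq 0$ — which follows from $\langle d_t, d_{t-1}\rangle = (1-\lambda_t^*)\|d_{t-1}\|^2 + \lambda_t^*\langle g_t, d_{t-1}\rangle \geq 0$ since both summands are nonnegative under our hypotheses — to drop the (negative) middle term entirely, yielding $\|g_t\|^2 \leq \frac{1}{(\lambda_t^*)^2}\|d_t\|^2 + \frac{(1-\lambda_t^*)^2}{(\lambda_t^*)^2}\|d_{t-1}\|^2$.

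This is still not bounded unless $\lambda_t^*$ is bounded away from $0$, so the final step is to produce a lower bound on $\lambda_t^*$ from the gap between $\|d_{t-1}\|$ and $\|d_t\|$. The idea: since $\|d_t\|^2 = \phi(\lambda_t^*) \cdot 2 \leq 2\phi(\lambda)$ for all $\lambda$, and in particular $\|d_t\|^2 \leq \|(1-\lambda)d_{t-1} + \lambda g_t\|^2$; but more directly, from $d_t = (1-\lambda_t^*)d_{t-1} + \lambda_t^* g_t$ and $\langle d_t, d_{t-1}\rangle \le \|d_t\|^2$ I get $(1-\lambda_t^*)\|d_{t-1}\|^2 \le \|d_t\|^2 - \lambda_t^*\langle g_t, d_{t-1}\rangle \le \|d_t\|^2$, hence $1 - \lambda_t^* \le \|d_t\|^2/\|d_{t-1}\|^2 < \varepsilon^2 / ((1+\eta)\varepsilon^2) = 1/(1+\eta)$, so $\lambda_t^* > \eta/(1+\eta)$ and $(1-\lambda_t^*)/\lambda_t^* < 1/\eta$. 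Substituting $1/\lambda_t^* < (1+\eta)/\eta$ and $(1-\lambda_t^*)/\lambda_t^* < 1/\eta$ into the bound for $\|g_t\|^2$, together with $\|d_t\| \le \varepsilon$, should give $\|g_t\|^2 \le \big((1+\eta)/\eta\big)^2\varepsilon^2 \cdot$(something) — I would need to be a bit more careful to land exactly on $(1 + 1/\eta)\varepsilon^2$; likely the sharper route is to bound $\|g_t\|$ directly by the triangle inequality $\|g_t\| = \|\frac{1}{\lambda_t^*}d_t - \frac{1-\lambda_t^*}{\lambda_t^*}d_{t-1}\|$ after first using $\langle d_t, d_{t-1}\rangle \ge 0$ to note $\|g_t\|^2 \le \frac{1}{(\lambda_t^*)^2}\|d_t\|^2$ is false in general, so instead use $\|\lambda_t^* g_t\| = \|d_t - (1-\lambda_t^*)d_{t-1}\|$ and the Pythagorean-type relation $\|d_t\|^2 + \|d_t - d_{t-1}\|^2 \le \|d_{t-1}\|^2$ that holds for projections onto convex sets. \textbf{The main obstacle} will be pinning down the exact constant $\sqrt{1 + 1/\eta}$: the qualitative bound $\|g_t\| = O(\varepsilon)$ follows easily from $\lambda_t^*$ bounded below, but matching the stated constant requires choosing the right projection identity (obtuse-angle inequality versus the energy identity $\|d_{t-1}\|^2 = \|d_t\|^2 + \|d_{t-1} - d_t\|^2 + 2\langle d_t, d_{t-1} - d_t\rangle$) and carefully tracking how the hypothesis $\|d_{t-1}\|^2 > (1+\eta)\varepsilon^2 \ge (1+\eta)\|d_t\|^2$ converts into the denominator $\eta$.
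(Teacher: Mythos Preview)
Your proposal sets up more machinery than needed and, as you yourself flag, does not close with the stated constant $\sqrt{1+1/\eta}$. The paper's argument is a short contradiction that bypasses all of the projection/optimality analysis you develop. Assume $\|g_t\|^2 > (1+\tfrac{1}{\eta})\varepsilon^2$; expand
\[
\|d_t\|^2 = (\lambda_t^*)^2\|g_t\|^2 + (1-\lambda_t^*)^2\|d_{t-1}\|^2 + 2\lambda_t^*(1-\lambda_t^*)\langle g_t, d_{t-1}\rangle,
\]
drop the cross term (nonnegative by hypothesis), and insert the two strict lower bounds on $\|g_t\|$ and $\|d_{t-1}\|$ to obtain
\[
\|d_t\|^2 > \Big[(1+\tfrac{1}{\eta})(\lambda_t^*)^2 + (1+\eta)(1-\lambda_t^*)^2\Big]\varepsilon^2.
\]
The bracket is a quadratic in $\lambda_t^*$ whose discriminant vanishes, hence it is $\geq 1$ for every $\lambda_t^* \in [0,1]$, giving $\|d_t\| > \varepsilon$, a contradiction. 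No information about the actual value of $\lambda_t^*$ is ever used.

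Your direct route can be salvaged, but not by first bounding $\lambda_t^*$ from below and then substituting --- that decoupling is exactly where you lose the sharp constant. Starting from the same dropped-cross-term inequality $\varepsilon^2 \geq \|d_t\|^2 \geq \lambda^2\|g_t\|^2 + (1-\lambda)^2(1+\eta)\varepsilon^2$ with $\lambda = \lambda_t^*$, solve for $\|g_t\|^2$ and maximize $[1 - (1+\eta)(1-\lambda)^2]/\lambda^2$ over $\lambda \in (0,1]$; the maximum is exactly $1+1/\eta$, attained at $\lambda = \eta/(1+\eta)$. This is the discriminant check above read contrapositively, so once you stop separating $\lambda_t^*$ from the main inequality the two arguments coincide. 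The auxiliary facts you derive ($\langle d_t, d_{t-1}\rangle \leq \|d_t\|^2$, the lower bound $\lambda_t^* > \eta/(1+\eta)$, the expression $g_t = \tfrac{1}{\lambda_t^*}d_t - \tfrac{1-\lambda_t^*}{\lambda_t^*}d_{t-1}$) are correct but play no role in obtaining the bound.
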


\begin{proof}
    Suppose the claim is false, then $||g_t||^2 > (1+\frac{1}{\eta}) \cdot \varepsilon^2$. Thus,
    \begin{equation} \label{norm of d_k}
        \begin{aligned}
            ||d_t||^2 & = ||\lambda_t^* g_t + (1-\lambda_t^*)d_{t-1}||^2\\
            & = (\lambda_t^*)^2||g_t||^2+(1-\lambda_t^*)^2||d_{t-1}||^2 + 2\lambda_t^*(1-\lambda_t^*)\langle g_t,d_{t-1} \rangle \\
            & \geq (\lambda_t^*)^2||g_t||^2+(1-\lambda_t^*)^2||d_{t-1}||^2 \\
            & > [(1+\frac{1}{\eta}) \cdot (\lambda_t^*)^2 + (1+\eta) \cdot (1-\lambda_t^*)^2 ] \cdot \varepsilon^2,
        \end{aligned}
    \end{equation}

\noindent where the first inequality holds because of the additional assumption. To prove that $||d_t|| > \varepsilon$, it suffices to show that 

\begin{equation*}
    (1+\frac{1}{\eta}) \cdot (\lambda_t^*)^2 + (1+\eta) \cdot (1-\lambda_t^*)^2 \geq 1.
\end{equation*}

\noindent Note that 

\begin{equation*}
    \begin{aligned}
        p(\lambda_t^*) \overset{\Delta}{=} & (1+\frac{1}{\eta}) \cdot (\lambda_t^*)^2 + (1+\eta) \cdot (1-\lambda_t^*)^2 - 1 \\
        = &(2+\frac{1}{\eta} + \eta) \cdot (\lambda_t^*)^2 -  2(1+\eta) \lambda_t^* + \eta, 
    \end{aligned}   
\end{equation*}

\noindent is a quadratic function with respect to $\lambda_t^*$ and its discriminant is $0$. Therefore, for any $\lambda_t^* \in [0,1]$, we have $p(\lambda_t^*) \geq 0$. This implies $||d_t|| > \varepsilon $. However, this contradicts the hypothesis that $||d_t|| \leq \varepsilon $ in the lemma.  
\end{proof}

\subsection{Sample Complexity Analysis}

In this subsection, we aim to give a guidance of how to choose the sample size so that the function $L_t$ will serve as an “effective” stochastic local approximation of $L$.

\begin{theorem} \label{sample complexity}
Assume that $L$ and $L_t$ are Lipschitz functions with constant $L_f$, for any $0 < \varepsilon <1$, $\kappa > \frac{4L_f}{\delta_{min}}$. If for all $t$ and $\theta$, $|L_t(\theta)| \leq M$ and 
\begin{equation} \label{sample_size}
 |N_t| \geq -8\log(\varepsilon/2)\cdot \frac{(M+1)^2}{\kappa^2\delta_t^4},    
\end{equation}
then
\begin{equation*}
    \mathbb{P}(|L_t(\theta)-L(\theta)| \leq \kappa \delta_t^2, \ \forall \theta \in \mathcal{B}(\hat{\theta}_t,\delta_t) ) \geq 1-\varepsilon.
\end{equation*}
\end{theorem}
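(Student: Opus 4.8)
The plan is to establish the uniform deviation bound via a standard two-step argument: first control the deviation $|L_t(\theta) - L(\theta)|$ at a single fixed $\theta$ using a concentration inequality (Hoeffding, since $L_t$ is an average of bounded terms with $|L_t| \le M$), and then extend this pointwise guarantee to hold uniformly over the ball $\mathcal{B}(\hat{\theta}_t, \delta_t)$ by combining a covering (net) argument with the Lipschitz continuity of $L$ and $L_t$. First I would fix $\theta$ and write $L_t(\theta)$ as an empirical average over the $|N_t|$ sampled subproblems, whose expectation is $L(\theta)$ (after appropriate normalization); Hoeffding's inequality then gives $\mathbb{P}(|L_t(\theta) - L(\theta)| > s) \le 2\exp(-2|N_t| s^2 / (2M)^2)$ for a single point, and I would pick the target accuracy $s$ to be a suitable fraction of $\kappa \delta_t^2$ (say $\kappa\delta_t^2/2$) so that the remaining budget absorbs the net discretization error.

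Next I would build a finite $\rho$-net $\{\theta_1, \dots, \theta_K\}$ of the ball $\mathcal{B}(\hat{\theta}_t, \delta_t)$ with $K \le (3\delta_t/\rho)^{d}$ (or whatever dimension-dependent bound applies), choosing the mesh size $\rho$ small enough that $2L_f \rho \le \kappa\delta_t^2/2$, i.e. $\rho \asymp \kappa\delta_t^2/L_f$. For any $\theta$ in the ball, pick the nearest net point $\theta_j$; then by the triangle inequality and Lipschitzness of both $L$ and $L_t$, $|L_t(\theta) - L(\theta)| \le |L_t(\theta_j) - L(\theta_j)| + 2L_f\rho$, so if the pointwise bound holds at every $\theta_j$ with accuracy $\kappa\delta_t^2/2$ we get the uniform bound $\kappa\delta_t^2$. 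A union bound over the $K$ net points gives failure probability at most $2K\exp(-|N_t|\kappa^2\delta_t^4/(8M^2))$, and the sample size lower bound \eqref{sample_size} is exactly what is needed to drive this below $\varepsilon$ — here the condition $\kappa > 4L_f/\delta_{\min}$ presumably enters to ensure the mesh $\rho$ is large enough relative to $\delta_t$ that the net cardinality $K$ stays controlled (so that $\log K$ is absorbed into the constant), and the $(M+1)^2$ rather than $M^2$ in \eqref{sample_size} leaves slack for the $\log K$ term.

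The main obstacle I anticipate is reconciling the clean, dimension-free form of the stated sample complexity \eqref{sample_size} with the covering argument, which in its naive form produces a $d \log(1/\rho)$ factor from $\log K$. The resolution likely hinges on the hypothesis $\kappa > 4L_f/\delta_{\min}$: when $\kappa$ is this large, the mesh size $\rho \asymp \kappa\delta_t^2/L_f$ is comparable to (or exceeds) $\delta_t$ itself, so the net has only $O(1)$ points and $\log K$ contributes a mere constant — which is then swallowed by replacing $M^2$ with $(M+1)^2$ and the factor $8$. Getting the bookkeeping of these constants exactly to match the displayed inequality, and verifying that the chosen $s$, $\rho$, and the resulting union bound assemble precisely into \eqref{sample_size}, is the delicate part; the concentration and Lipschitz-extension steps themselves are routine.
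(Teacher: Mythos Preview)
Your plan is sound and would succeed, but it is more elaborate than what the paper actually does. The paper's proof applies Hoeffding at a \emph{single} point --- the center $\hat{\theta}_t$ --- with target accuracy $\tfrac{1}{2}\kappa\delta_t^2$, and then extends to the whole ball in one step via the triangle inequality $|L_t(\theta)-L(\theta)| \le |L_t(\theta)-L_t(\hat{\theta}_t)| + |L_t(\hat{\theta}_t)-L(\hat{\theta}_t)| + |L(\hat{\theta}_t)-L(\theta)| \le 2L_f\delta_t + \tfrac{1}{2}\kappa\delta_t^2$. The hypothesis $\kappa > 4L_f/\delta_{\min}$ gives $2L_f\delta_t < \tfrac{1}{2}\kappa\delta_t^2$ directly, so the Lipschitz term is absorbed and no covering or union bound is needed at all.

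In effect, the paper's argument is precisely the degenerate ``one-point net'' case you yourself identify in your final paragraph; the paper just starts there rather than arriving at it through a general covering setup. Your route would work too, but the bookkeeping you worry about (net cardinality, $\log K$, matching constants) simply never arises. One small correction: the factor $(M+1)^2$ in \eqref{sample_size} is not there to leave slack for a union bound --- in the paper it appears already in the single-point Hoeffding step, so your speculation about its role is off.
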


The proof of Theorem \ref{sample complexity} is shown in Appendix \ref{appendix}.

\section{Experiments} \label{experiments}

\subsection{Experimental Setup}

The experiments aim to compare the performance of different optimization algorithms on training LLM based on LSTM (which is widely used in many different areas (\cite{feng2025sca,li2025mitigating,litabular,zeng2025futuresightdrive})). The optimizers included in the study are Adam, AdamW and SCSAdamW. The comparison is conducted by training the language model Wikitext-2~\cite{merity2016pointer}, PennTreebank~\cite{marcus1993building}, ag-news~\cite{zhang2015character} and imbd~\cite{maas2011learning}, measuring the loss across multiple epochs, and analyzing the convergence behavior of each optimizer.

The language model is implemented as an LSTM-based neural network with an embedding layer, a four-layer LSTM with a hidden size of 256, an attention layer and a fully connected output layer mapping to a vocabulary size of 1000. Both input $x$ and target $y$ sequences have a batch size of 32 and a sequence length of 20. The loss function used is cross-entropy loss, which is standard for classification-based language modeling tasks.

The training process is conducted over 200 epochs for each optimizer. To ensure a fair comparison, the model is reinitialized with the same weights before training with each optimizer. Each optimizer is configured with a learning rate of 0.001 and a weight decay of 0.001. The hidden states of the LSTM are reset and detached at each epoch to prevent gradient explosion. Throughout training, the loss values are recorded for each epoch, and performance is evaluated by comparing the loss curves across optimizers.

\subsection{Preliminary Experimental Results}
The algorithms of this paper were implemented on a MacBook Pro 2023 with a 2.6GHz 12-core Apple M3 Pro processor and 32GB of 2400MHz DDR4 onboard memory. The code used in this research is written in Python and is available at the following GitHub repository: \href{https://github.com/yhz0/scs-experiments}{SCSAdamW} (accessed on April 24, 2025), and the computational results are shown in Figure \ref{scs pha}.

\begin{figure}[H]
     \centering
     \caption{Objective function values for different algorithms.}
     \begin{subfigure}[b]{0.45\textwidth}
         \centering
        \includegraphics[width=\textwidth]{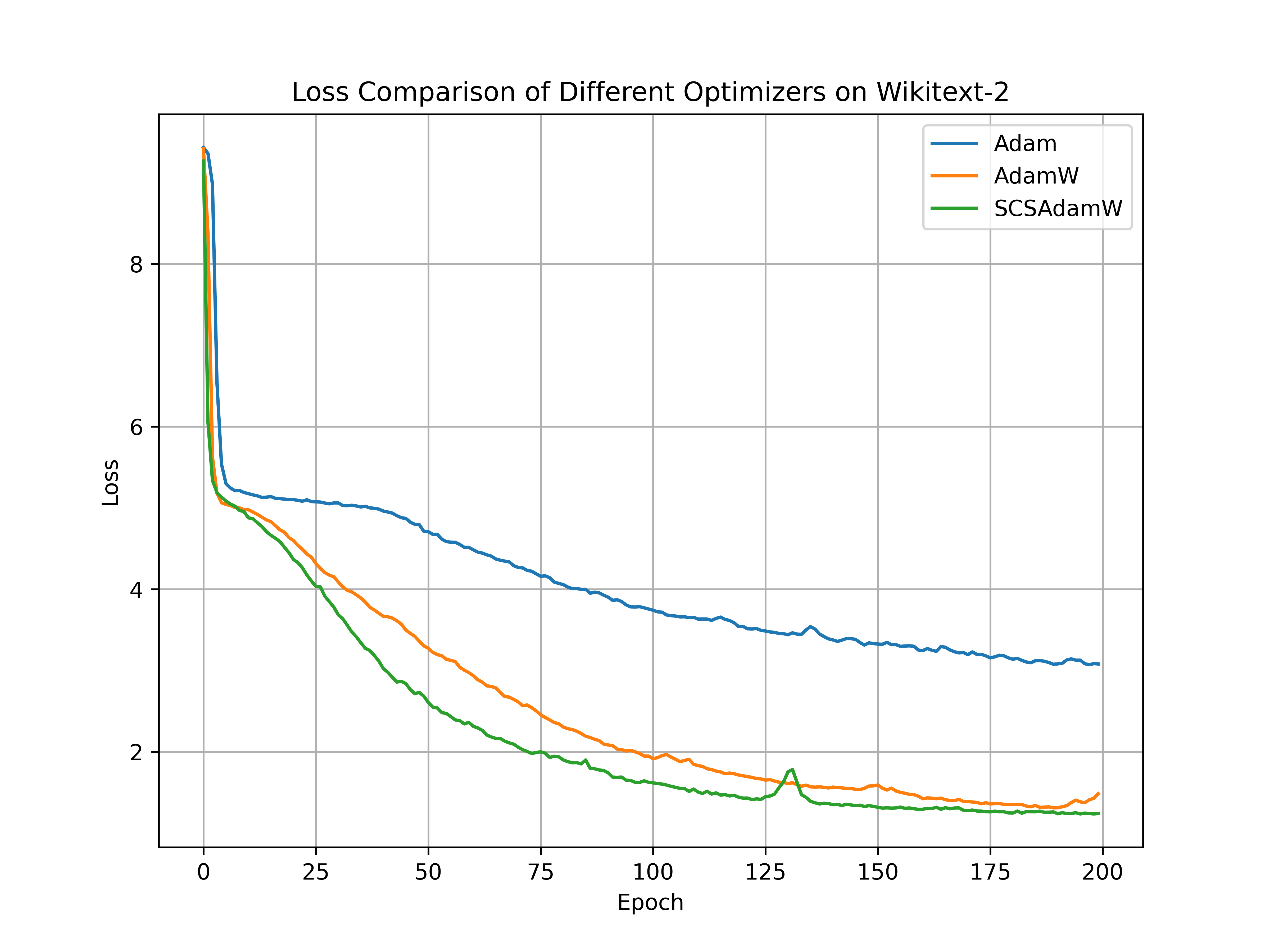}
         \caption{Wikitext-2}
     \end{subfigure}
     \hfill
     \begin{subfigure}[b]{0.45\textwidth}
         \centering
         \includegraphics[width=\textwidth]{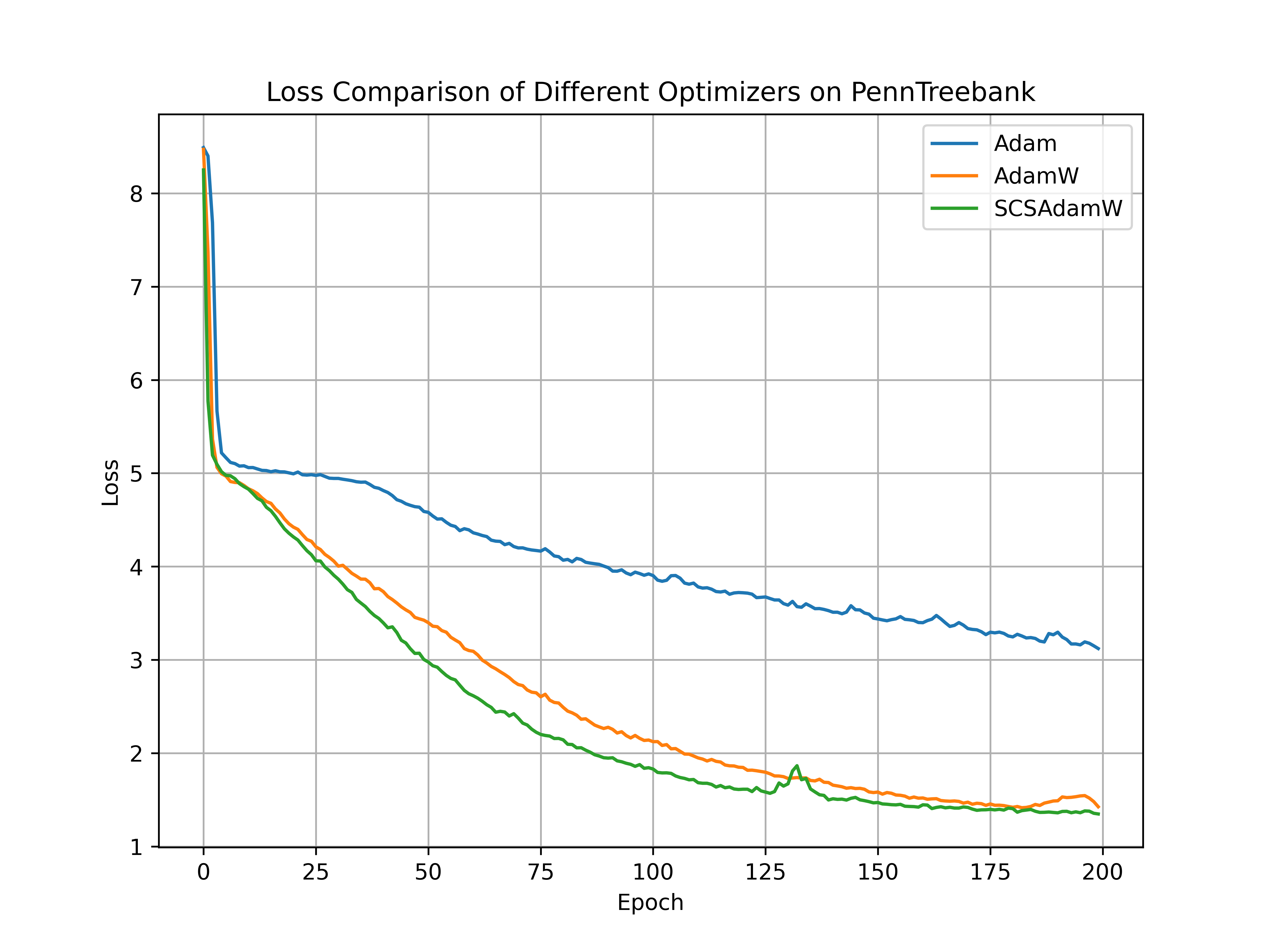}
         \caption{PennTreebank}
     \end{subfigure} \\
     \begin{subfigure}[b]{0.45\textwidth}
         \centering
         \includegraphics[width=\textwidth]{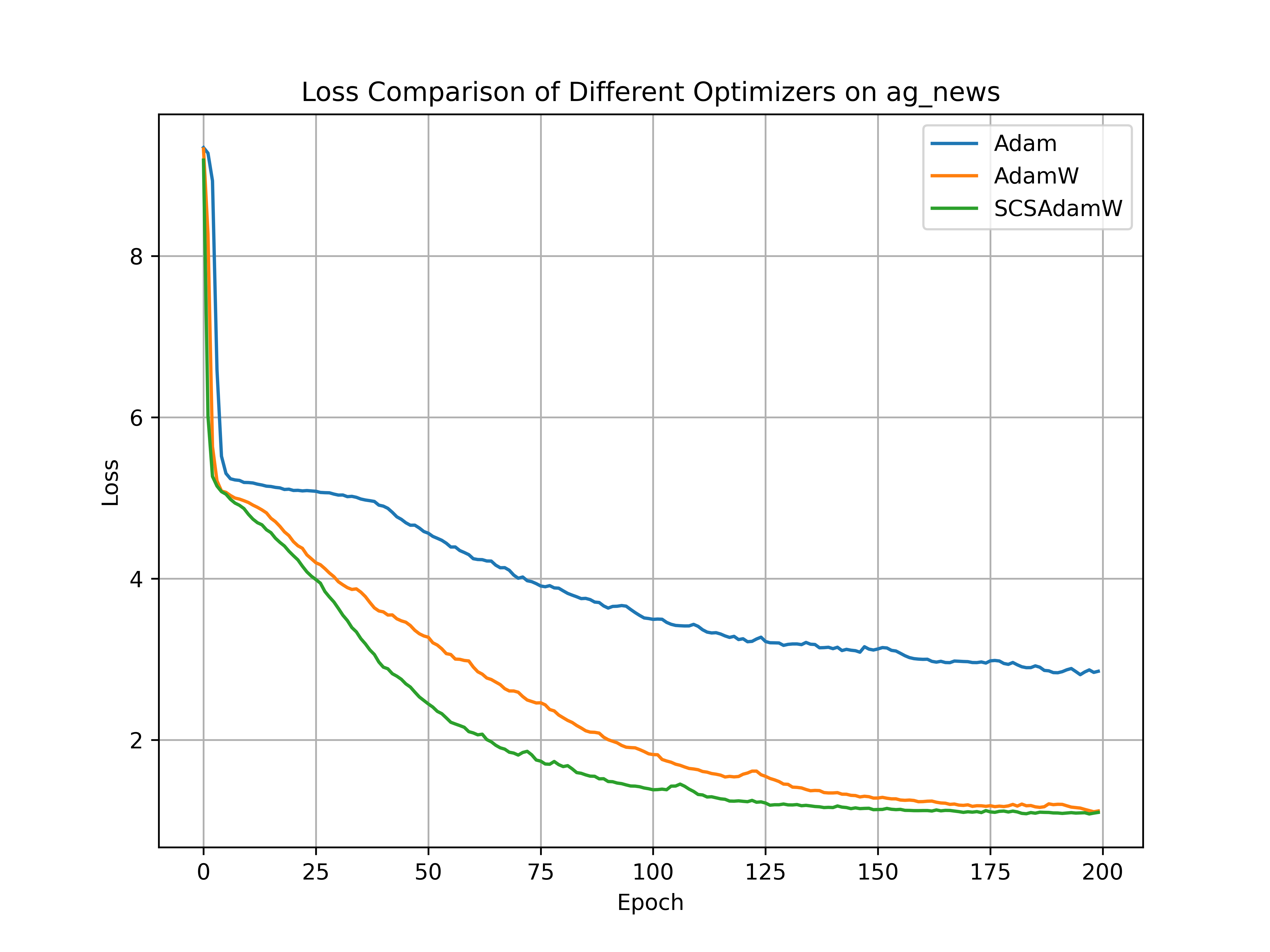}
         \caption{Ag-news}
     \end{subfigure}
     \hfill
     \begin{subfigure}[b]{0.45\textwidth}
         \centering
         \includegraphics[width=\textwidth]{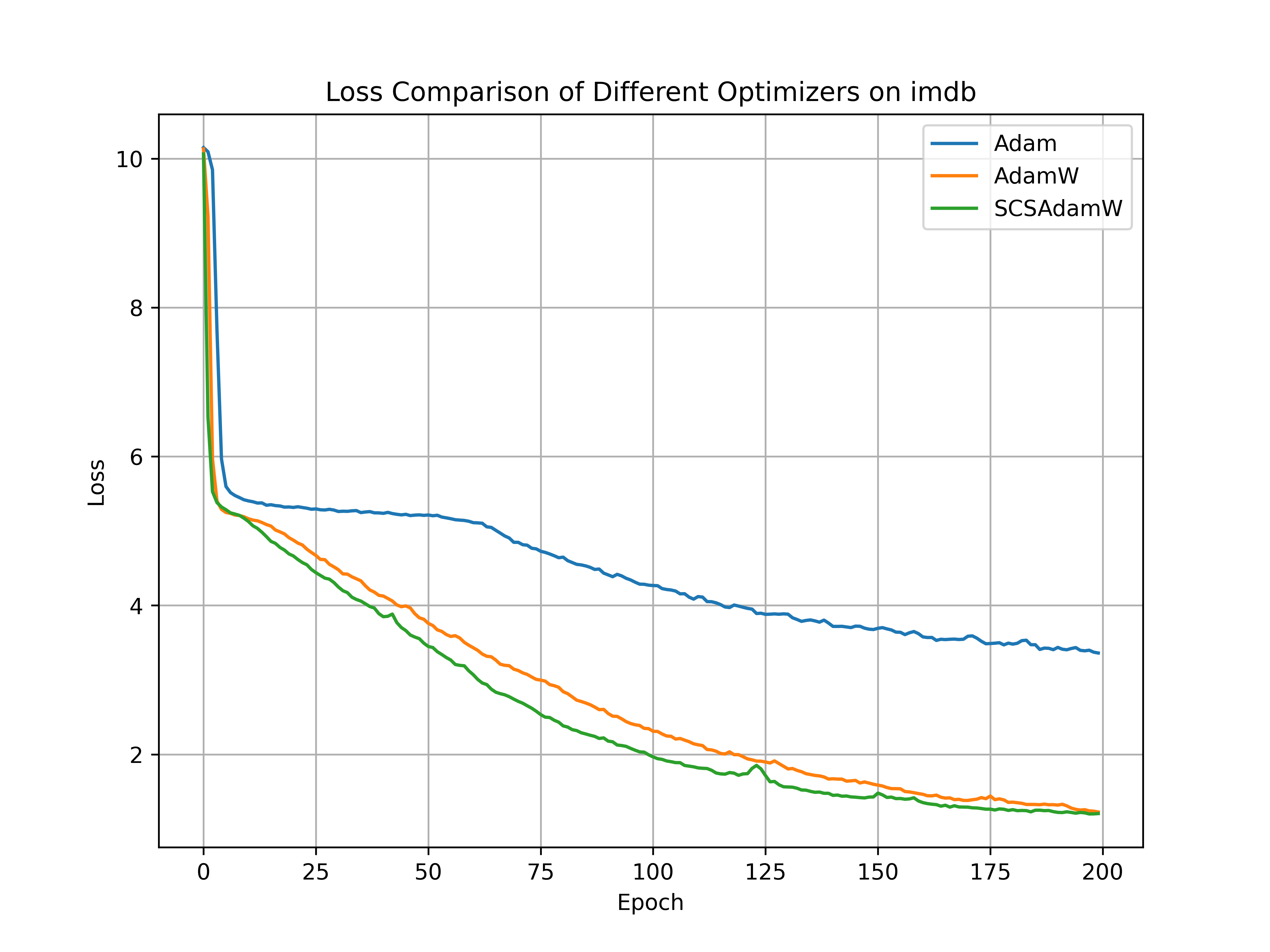}
         \caption{Imdb}
     \end{subfigure}
        \label{scs pha}
\end{figure}

\noindent \textbf{Remark}: Compared with the benchmark (Adam and AdamW) algorithms for training LLMs, the objective function values of SCSAdamW decrease the fastest, reflecting the good properties demonstrated in \textsection\ref{discussion}.    

\section{Limitations and Future Work} \label{limitations}

Although the SCSAdamW algorithm demonstrates strong performance both theoretically and empirically, several limitations remain, as outlined below:

\begin{itemize}
    \item \textbf{Smooth Direction Update.} As shown in Figure~\ref{scs pha}, while the proposed method achieves faster decreases in the objective function, it also exhibits greater fluctuations. This is primarily because, unlike AdamW, which uses a momentum term with decay rate $\beta_1$ to smooth the update direction, our algorithm relies on $\lambda_t^*$, which can become unstable—especially when approaching zero. To address this, a smoother update rule such as
    \[
    \lambda_t^* = \sigma\left(\text{clip}(\lambda_t^*, -5, 5)\right)
    \]
    may help stabilize the updates, where $\sigma(\cdot)$ denotes the sigmoid function.

    \item \textbf{Periodic Restarts.} The SCS direction accumulates information from previous subgradients, which can sometimes mislead the current search direction if outdated or irrelevant. To mitigate this, a periodic restart mechanism may be beneficial to reset the direction and avoid stagnation or divergence.

    \item \textbf{Scalability to Larger Instances and LLMs.} Due to computational resource constraints, our preliminary experiments are limited to relatively small datasets and language models. Future work will explore larger-scale benchmarks using GPU acceleration to better assess the scalability and robustness of the proposed method.
\end{itemize}

\section{Conclusion} \label{conclusion}

This paper introduced SCSAdamW, an optimization algorithm can be used for training Large Language Models (LLMs). Our approach presents several key innovations compared to conventional SGD-based benchmark algorithms:

\begin{enumerate} \item SCSAdamW employs a stochastic conjugate subgradient direction, as opposed to standard subgradient directions. The advantageous theoretical properties of this direction, discussed in \textsection\ref{discussion}, allow it to incorporate more information about the loss landscape, aiming for faster convergence than typical first-order methods.

\item While the conjugate subgradient approach seeks to capture benefits often associated with higher-order methods (such as improved curvature handling), SCSAdamW maintains a computational resource requirement comparable to that of standard SGD methods. Specifically, unlike true second-order methods that necessitate the computation and storage of a Hessian matrix, our algorithm only requires access to the search direction from the previous iteration.

\item By leveraging sample complexity analysis and concentration inequalities, SCSAdamW dynamically adjusts the sample size during training. This adaptive approach can offer improved robustness and efficiency compared to traditional training paradigms that rely on fixed sample sizes (fixed-batch SAA) throughout optimization.

\end{enumerate}

In addition to its promising theoretical underpinnings, our preliminary computational results highlight the algorithm's strong empirical performance across several test instances. On these benchmarks, SCSAdamW demonstrated faster convergence and achieved lower objective function values compared to widely used optimizers such as Adam and AdamW. Given the broad applicability of LLMs, we believe our algorithm will prove valuable in a wide range of practical scenarios~\cite{song2024research,song2025research}.

\bibliographystyle{abbrv}
\bibliography{main} 

\appendix

\section{Technical Appendices and Supplementary Material} \label{appendix}

Proof of Theorem \ref{sample complexity} is given bellow.
\begin{proof}
First, at the point $\theta_t$, let 
\begin{equation*}
    \begin{aligned}
        & L(\theta_t)=-E [ \sum_{t=1}^{T_i} \log P_\theta(y_t^{(i)} \mid y_{1:t-1}^{(i)})],\\
        & L_t(\theta_t)= - \frac{1}{N_t}
         \sum_{i=1}^{N_t} \sum_{t=1}^{T_i} \log P_\theta(y_t^{(i)} \mid y_{1:t-1}^{(i)}).
    \end{aligned}
\end{equation*}

\noindent Thus, 

\begin{equation*}
        L_t(\theta_t)-L(\theta_t)=E [ \sum_{t=1}^{T_i} \log P_\theta(y_t^{(i)} \mid y_{1:t-1}^{(i)})]- \frac{1}{N_t}
         \sum_{i=1}^{N_t} \sum_{t=1}^{T_i} \log P_\theta(y_t^{(i)} \mid y_{1:t-1}^{(i)}).
\end{equation*}
\noindent Using the assumption that $|Lt(\theta_t)| \leq M$ for all $t$, Hoeffding's inequality implies that
\begin{equation*}
    \mathbb{P}(|L_t(\theta_t)-L(\theta_t)| \leq \frac{1}{2}\kappa \delta_t^2) \geq 1-2\exp \left (-\frac{\kappa^2\delta_t^4 N_t^2}{8 \cdot N_t \cdot (M+1)^2} \right ) \geq 1-\varepsilon,
\end{equation*}
\noindent which indicates that when 
\begin{displaymath}
   N_t \geq -8\log(\varepsilon/2)\cdot \frac{(M+1)^2}{\kappa^2\delta_t^4}, 
\end{displaymath} 
we have 
\begin{equation*}
    \mathbb{P}(|L_t(\theta_t)-L(\theta_t)| \leq \frac{1}{2}\kappa\delta_t^2 ) \geq 1-\varepsilon.
\end{equation*}

\noindent For any other $\theta \in \mathcal{B}(\theta_t,\delta_t)$, if $|L_t(\theta_t)-L(\theta_t)| \leq \frac{1}{2}\kappa\delta_t^2$, then

\begin{equation*}
    \begin{aligned}
        |L_t(\theta)-L(\theta)| & \leq |L_t(\theta)-L_t(\theta_t)|+|L_t(\theta_t)-L(\theta_t)|+ |L(\theta_k)-L(\theta)| \\
        & \leq 2L_f \cdot \delta_t + \frac{1}{2} \kappa \delta_t^2\\
        & \leq \kappa \delta_t^2,
    \end{aligned}
\end{equation*}

\noindent where the second last inequality is due to the assumption that $L_t$ and $L$ are Lipschitz functions and the last inequality is because $\kappa>\frac{4L_f}{\delta_{min}}>\frac{4L_f}{\delta_t}$. Thus, we conclude that if $N_t$ satisfies Equation \eqref{sample_size}, then 

\begin{equation*} \label{fl1}
    \mathbb{P}(|L_t(\theta)-L(\theta)| \leq \kappa \delta_t^2, \ \forall \theta \in \mathcal{B}(\theta_t,\delta_t) ) \geq 1-\varepsilon.
\end{equation*}
\end{proof}

\end{document}